\DeclareMathOperator*{\trace}{tr}
\newtheorem{proposition}{Proposition}
\title{Deep Partial Multi-Label Learning with Graph Disambiguation}
\author{
Haobo Wang $^1$\and
Shisong Yang$^2$\and
Gengyu Lyu$^2$\thanks{Corresponding author.}\and
Weiwei Liu $^3$\and
Tianlei Hu$^1$\and\\
Ke Chen$^1$\and
Songhe Feng$^4$\And
Gang Chen$^1$
\affiliations
$^1$Zhejiang University$\quad$
$^2$Beijing University of Technology\\
$^3$Wuhan University $\quad$
$^4$Beijing Jiaotong University
\emails
\{wanghaobo, htl, chenk, cg\}@zju.edu.cn,
yangshisong@emails.bjut.edu.cn,\\ lyugengyu@bjut.edu.cn,
liuweiwei863@gmail.com,
shfeng@bjtu.edu.cn
}
\begin{document}

\maketitle

\begin{abstract}
    In partial multi-label learning (PML), each data example is equipped with a candidate label set, which consists of multiple ground-truth labels and other false-positive labels. Recently, graph-based methods, which demonstrate a good ability to estimate accurate confidence scores from candidate labels, have been prevalent to deal with PML problems. However, we observe that existing graph-based PML methods typically adopt linear multi-label classifiers and thus fail to achieve superior performance. In this work, we attempt to remove several obstacles for extending them to deep models and propose a novel deep \textbf{P}artial multi-\textbf{L}abel model with gr\textbf{A}ph-disamb\textbf{I}guatio\textbf{N} (\textbf{PLAIN}). Specifically, we introduce the instance-level and label-level similarities to recover label confidences as well as exploit label dependencies. At each training epoch, labels are propagated on the instance and label graphs to produce relatively accurate pseudo-labels; then, we train the deep model to fit the numerical labels. Moreover, we provide a careful analysis of the risk functions to guarantee the robustness of the proposed model. Extensive experiments on various synthetic datasets and three real-world PML datasets demonstrate that PLAIN achieves significantly superior results to state-of-the-art methods. 
\end{abstract}

\section{Introduction}
The rapid growth of deep learning allows us to tackle increasingly sophisticated problems.
Among them, multi-label learning (MLL), which assigns multiple labels to an object, is a fundamental and intriguing task in various real-world applications such as image retrieval \cite{DBLP:journals/tip/LaiYSWY16}, autonomous vehicles \cite{DBLP:journals/tip/ChenZTHZ19}, and sentiment analysis \cite{9490212}. Typically, these tasks require precisely labeled data, which is expensive and time-consuming due to the complicated structure and the high volume of the output space.

To address this issue, plenty of works have studied different weakly-supervised settings of MLL, including semi-supervised MLL \cite{DBLP:conf/kdd/ShiSLG20}, multi-label with missing labels \cite{DBLP:conf/cvpr/DurandMM19,DBLP:conf/cvpr/HuynhE20b}, and single-labeled MLL \cite{DBLP:journals/corr/abs-2206-00517}.
While these learning paradigms mainly battle the rapid growth of data volume, they overlook the intrinsic difficulty and ambiguity in multi-label annotation.
%For example, in Figure \ref{fig:example}, it is hard for the annotators to precisely identify whether there is a bird on the trees. 
For example, it is hard for a human annotator to identify an Alaskan Malamute from a Siberian Husky.
One potential solution is to randomly decide the binary label value. However, it may involve uncontrollable label noise that has a negative impact on the training procedure. Another strategy is to leave it missing, but then, we may lose critical information. Recently, partial multi-label learning (PML) \cite{DBLP:conf/aaai/XieH18} has been proposed to alleviate this problem. In PML, the annotators are allowed to provide a candidate label set that contains all potentially relevant labels as well as false-positive labels.

%\begin{figure}[t]
%\begin{center}
%\includegraphics[width=0.87\linewidth]{}
%\end{center}
%   \caption{An illustration of different types of supervision. [1] full labels. [2] multi-label with missing labels. [3] noisy labels, where some labels are wrong. In partial multi-label [4], all candidate relevant labels are provided, e.g. bird is a false-positive label.}
%\label{fig:example}
%\end{figure}

A straightforward approach to learning from PML data is to regard all candidate labels as valid ones, and then, off-the-shelf multi-label methods can be applied. Nevertheless, the presence of false-positive labels can highly impact predictive performance. To cope with this issue, many PML methods \cite{DBLP:conf/aaai/XieH18,DBLP:conf/icdm/YuCDWLZW18,DBLP:conf/icdm/HeD0SL19,DBLP:conf/aaai/SunFWLJ19,DBLP:conf/ijcai/LiLF20,DBLP:conf/kdd/LyuFL20,DBLP:conf/icdm/XieH20,DBLP:journals/kbs/YanLF21,DBLP:journals/pami/XieH22} have been developed. Amongst them, graph-based PML methods \cite{DBLP:conf/ijcai/Wang0ZZHC19,DBLP:conf/ijcai/LiW20,DBLP:conf/aaai/0009LG20,DBLP:conf/kdd/LyuFL20,zml-pami-pml} have been popular. In practice, graph-based algorithms demonstrate a good ability to obtain relatively precise confidence of candidate labels by aggregating information from the nearest neighbors. However, we observe that most of them adopt linear classifier, which is less powerful to deal with complicated tasks and high data volumes.

Consequently, there is an urgent need for developing an effective deep model to boost the performance of graph PML methods. However, there are two main challenging issues. First, existing graph-based PML methods typically require propagating labels over the whole training dataset. It restricts their scalability when employing deep classifiers since the deep models are trained in batch mode. Second, it is well acknowledged that there exist label correlations in multi-label datasets, which also helps disambiguate the candidate labels. For example, if labels \textit{Amusement Park} and \textit{Mickey Mouse} are identified as true, it is very likely that label \textit{Disney} is also a ground-truth label. Thus, it is required to exploit the label correlations to improve the disambiguation ability in a unified framework.

In this work, we propose a novel deep \textbf{P}artial multi-\textbf{L}abel model with gr\textbf{A}ph disamb\textbf{I}guatio\textbf{N} (PLAIN). In PLAIN, we involve both instance- and label-level similarities to disambiguate the candidate labels set, which simultaneously leverages instance smoothness as well as label correlations. Then, we build a deep multi-label classifier to fit the relatively credible pseudo-labels generated through disambiguation. Instead of separately training in two stages like previous works \cite{DBLP:conf/ijcai/Wang0ZZHC19,zml-pami-pml}, we propose an efficient optimization framework that iteratively propagates the labels and trains the deep model. Moreover, we give a careful analysis of the risk functions such that proper risk functions are chosen for improved robustness. 
% By choosing a proper loss function, our model becomes more robust to the noisy labels. 
Empirical results on nine synthetic datasets and three real-world PML datasets clearly verify the efficiency and effectiveness of our method. More theoretical and empirical results can be found in Appendix. 

%Recently, graph-based models, including label propagation based methods \cite{DBLP:conf/ijcai/LiW20,zml-pami-pml} and manifold regularization-based algorithms \cite{DBLP:conf/ijcai/Wang0ZZHC19,DBLP:conf/aaai/0009LG20}, have demonstrated good ability to disambiguate data examples by aggregating information from their neighbors.

%The pioneer work \cite{DBLP:conf/aaai/XieH18} estimates the confidence of each candidate label being correct via minimizing confidence-weighted ranking loss. Some works \cite{DBLP:conf/icdm/YuCDWLZW18,DBLP:conf/aaai/SunFWLJ19} recover the ground-truth by assuming the true label matrix is low-rank.

%In PML, disambiguating ground-truth labels and exploiting label correlations pose two major challenging issues. First, the logical candidate labels are disambiguated to real-valued ones. Second, a correlation-aware multi-label classifier is induced from the relatively credible labels. In particular,

%The rest of this work is organized as follows. In the next section, we give the related works of our method. Then, we provide a detailed description of our training procedure. After that, the results of empirical studies are reported. Concluding remarks are provided in the last section.

% An intuitive solution, one-versus-all method (OVA) \cite{DBLP:journals/tkde/ZhangZ14}, is to decompose the original problem to a series of single-labeled classification problems. Despite the simplicity of OVA, it overlooks the rich semantic information and dependencies among labels, and thus fails to obtain satisfactory performance.

\section{Related Works}

%\subsection{Multi-Label Learning}
\paragraph{Multi-Label Learning (MLL)} \cite{DBLP:journals/pami/LiuWST22,DBLP:journals/corr/abs-2210-03968} aims at assigning each data example multiple binary labels simultaneously. An intuitive solution, the one-versus-all method (OVA) \cite{DBLP:journals/tkde/ZhangZ14}, is to decompose the original problem into a series of single-labeled classification problems. However, it overlooks the rich semantic information and dependencies among labels, and thus fails to obtain satisfactory performance. To solve this problem, many MLL approaches are proposed, such as embedding-based methods \cite{DBLP:conf/icml/Yu0KD14,DBLP:journals/tnn/ShenLTSO18}, classifier chains \cite{DBLP:journals/ml/ReadPHF11}, tree-based algorithms \cite{DBLP:conf/kdd/WeiSL21}, and deep MLL models \cite{DBLP:conf/aaai/YehWKW17,DBLP:conf/cvpr/ChenWWG19,DBLP:conf/ijcai/BaiKG20}. Nevertheless, these methods typically require a large amount of fully-supervised training data, which is expensive and time-consuming in MLL tasks. To this end, some works have studied different weakly-supervised MLL settings. For example, semi-supervised MLL \cite{DBLP:conf/nips/NiuHSC19,DBLP:conf/kdd/ShiSLG20} leverages information from both fully-labeled and unlabeled data. Multi-label with missing labels  \cite{DBLP:conf/cvpr/DurandMM19,DBLP:journals/ml/WeiGLG18,DBLP:journals/corr/abs-2206-00517} allows providing a subset of ground-truth. In noisy MLL \cite{DBLP:conf/cvpr/VeitACKGB17}, the binary labels may be flipped to the wrong one. Nevertheless, though these settings relieve the burden of multi-label annotating, they ignore the inherent difficulty of labeling, i.e. the labels can be ambiguous.

%To this end, this work focuses on a recent popular learning paradigm, partial multi-label learning \cite{DBLP:conf/aaai/XieH18}, that learns from a set of candidate labels.

%\subsection{Partial Label Learning}
%In partial label learning (PLL), each instance is equipped with a set of candidate labels, among which only one is correct \cite{}.

%\subsection{Partial Label Learning}

%\subsection{Partial Multi-Label Learning}
\paragraph{Partial Multi-Label Learning (PML)} learns from a superset of ground-truth labels, where multiple labels are possible to be relevant. To tackle the ambiguity in the labels, the pioneering work \cite{DBLP:conf/aaai/XieH18} estimates the confidence of each candidate label being correct via minimizing confidence-weighted ranking loss. Some works \cite{DBLP:conf/icdm/YuCDWLZW18,DBLP:conf/aaai/SunFWLJ19,DBLP:conf/ijcai/LiLF20,DBLP:conf/aaai/GongYB22} recover the ground-truth by assuming the true label matrix is low-rank. Recently, graph-based methods \cite{DBLP:conf/ijcai/Wang0ZZHC19,DBLP:conf/aaai/0009LG20,DBLP:conf/kdd/LyuFL20,DBLP:conf/icdm/XieH20,zml-pami-pml,9615493,DBLP:journals/pr/TanLWZ22} have attracted much attention from the community due to the good disambiguation ability. For instance, PARTICLE \cite{zml-pami-pml} identifies the credible labels through an iterative label propagation procedure and then, applies pair-wise ranking techniques to induce a multi-label predictor.
%HALE \cite{DBLP:conf/kdd/LyuFL20} disambiguates the ambiguous labels via a probabilistic graph matching mechanism and the prediction is made by kNN voting.
Although graph-based models have shown promising results, we observe that most of them adopt linear classifiers, which significantly limits the model expressiveness. Besides, they typically separate the graph disambiguation and model training into two stages, which makes the multi-label classifier error-prone; in contrast, the proposed PLAIN is an end-to-end framework.

Some studies have also developed deep PML models, such as adversarial PML model \cite{DBLP:journals/corr/abs-1909-06717} and mutual teaching networks \cite{DBLP:journals/kbs/YanLF21}. However, it remains urgent to combine the graph and deep models together such that the model enjoys high expressiveness as well as good disambiguation ability. It should also be noted that the recent popular graph neural networks (GNN) model \cite{DBLP:journals/tnn/WuPCLZY21} is not suitable for PML. The reason is that GNN aims at better representation learning (at an instance-level), while in PML, the graph structure is used to alleviate the label ambiguity.% Besides, we note that our work is different from existing existing studies that also employ instance-level or label-level similarity regularizers \cite{DBLP:journals/jmlr/BelkinNS06,DBLP:journals/tkde/ZhuKZ18,DBLP:conf/icdm/XieH20,DBLP:conf/ijcai/LiLF20}. First, most of them are not designed for the PML task. Second, they all adopt linear classifiers. In contrast, the main goal of this paper is to develop an efficient and correlation-aware deep graph-based PML framework.

\section{Method}\label{Section-Models}
We denote the $d$-dimensional feature space as $\mathcal{X}\subset\mathbb{R}_d$ , and the label space as $\mathcal{Y}=\{1,2,\ldots,L\}$ with $L$ class labels. The training dataset $\mathcal{D}=\{(\bm{x}_i, S_i)|1\le i\le n\}$ contains $n$ examples, where $\bm{x}_i\in\mathcal{X}$ is the instance vector and $S_i\subset\mathcal{Y}$ is the candidate label set. Without loss of generality, we assume that the instance vectors are normalized such that $||\bm{x}_i||_2=1$. Besides, we denote $\tilde{S}_i\subset\mathcal{Y}$ as the ground-truth label set, which is a subset of $S_i$. For simplicity, we define $\bm{y}_i$, $\tilde{\bm{y}}_i\in\{0,1\}^L$ as the binary vector representations of $S_i$ and $\tilde{S}_i$. When considering the whole dataset, we also denote the candidate label matrix by $\bm{Y}=[\bm{y}_1,\ldots,\bm{y}_n]^\top\in\mathbb{R}^{n\times L}$.

In what follows, we first introduce the learning target of our model that integrates label correlations to boost disambiguation ability. Then, we provide an efficient optimization framework for our deep propagation network.

\begin{figure*}[t]
\begin{center}
\includegraphics[width=0.75\linewidth]{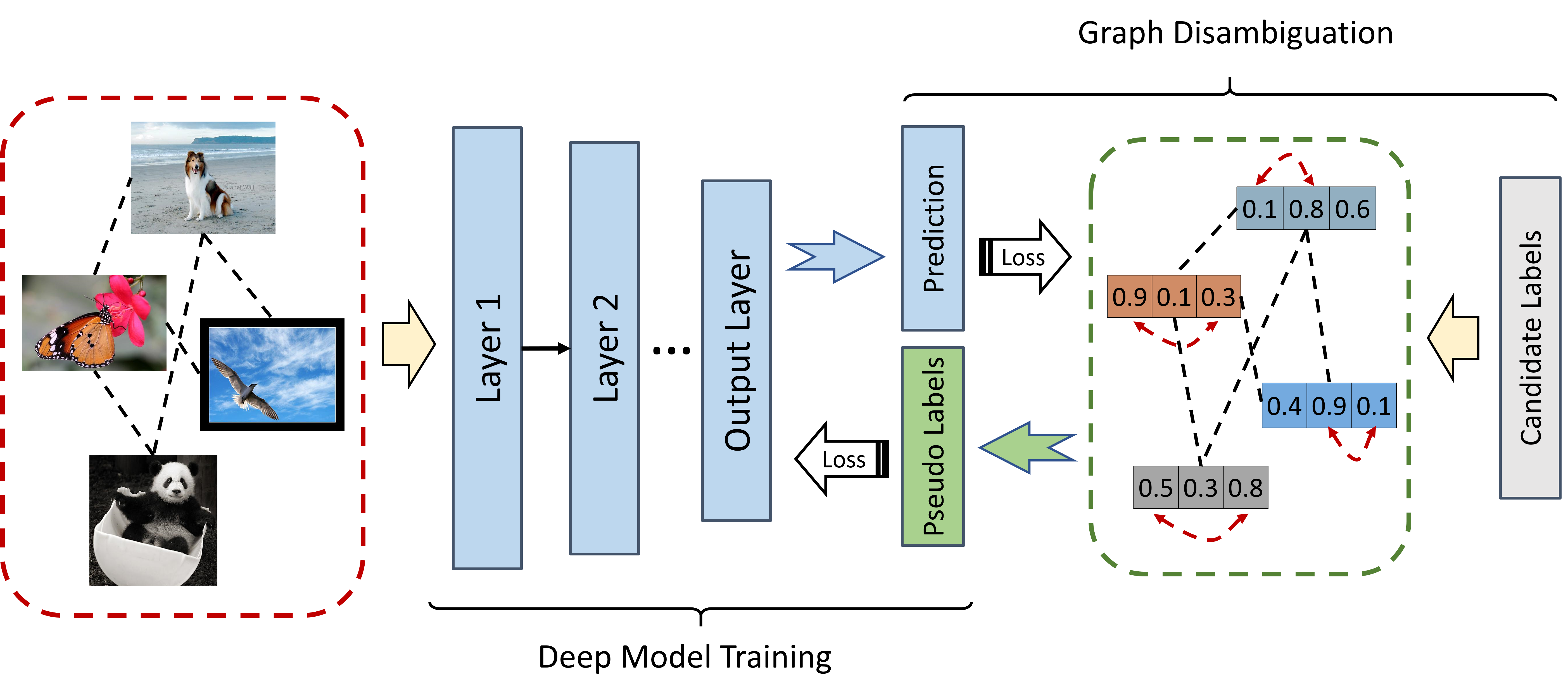}
\end{center}
\vskip -0.1in
   \caption{The model architecture of PLAIN. At each epoch, we first train the deep model by traveling the training dataset and then propagating the labels with the help of candidate labels and model prediction; PLAIN alternates between the two operations. }
\label{fig:model}
\vskip -0.05in
\end{figure*}

%We remark that in PML, we have no direct access to the true labels $\hat{S}_i$ and $\tilde{\bm{y}}$.

\subsection{Learning Objective}
Our ultimate goal is to estimate a multi-label predictor $f:\mathcal{X}\mapsto[0,1]^L$ from $\mathcal{D}$. In this work, we assume that its output $\hat{\bm{y}}=f(\bm{x})$ satisfies the following properties,
\begin{enumerate}
    \item $\hat{\bm{y}}$ is close to the candidate vector $\bm{y}$;
    \item $\hat{\bm{y}}$ satisfies the \textbf{instance-level similarity}. That is, if two instances are close to each other, then their label vectors are likely to be close too;
    \item $\hat{\bm{y}}$ satisfies the \textbf{label-level similarity}. That is, if two labels co-occur in the training data frequently, then they tend to co-exist at many instances. %For example, if \textit{Amusement Park} and \textit{Mickey Mouse} are identified as ground-truth, it is very likely that label \textit{Disney} is also a ground-truth label.
\end{enumerate}
These properties coincide with our goals. The first property is natural since the only thing we can trust is the candidate label set. The second one has been widely adopted to disambiguate the ground-truth label from the false-positive labels. The last one exploits the label correlations to further promote the performance.
In this paper, we adopt a graph-based model to achieve these goals. Generally speaking, the graph structure can be collected in various ways. One direct strategy is to use prior knowledge. For instance, for data mining tasks, social networks \cite{DBLP:conf/kdd/QiuTMDW018} can be used to depict instance similarity. The label similarity can also be estimated using an external knowledge graph \cite{DBLP:conf/cvpr/LeeFYW18}.

However, we notice that there is no available graph structure in existing public PML datasets. Hence, we propose to construct the instance graph $\bm{A}^x\in\mathbb{R}^{n\times n}$ and the label graph $\bm{A}^y\in\mathbb{R}^{L\times L}$ in a data-driven way. To obtain $\bm{A}^x$, we adopt a monomial kernel \cite{DBLP:conf/cvpr/IscenTAFC17} as the distance metric and calculate a sparse instance affinity matrix $\bm{S}\in\mathbb{R}^{n\times n}$ with elements,
\begin{equation*}
\begin{split}
s_{ij} :=
\begin{cases}
    \max(\bm{x}_i^\top\bm{x}_j, 0)^\rho, \quad \text{if } i\ne j\text{ and } j\in\mathcal{N}_k(\bm{x}_i) \\
    0, \quad\quad\quad\quad\quad\quad \text{otherwise}
\end{cases}.
\end{split}
\end{equation*}
Here $\mathcal{N}_k(\bm{x}_i)$ saves the indices of $k$-nearest neighbors of $\bm{x}_i$. $\rho>0$ controls the weights of similarity. Then, we get a symmetrized instance graph by $\bm{A}^x=\bm{S}+\bm{S}^\top$.

Inspired by HALE \cite{DBLP:conf/kdd/LyuFL20}, we build the label graph $\bm{A}^y$ by means of the label co-occurrence statistics from the training data, whose elements are given by,
\begin{equation*}
\begin{split}
a^y_{ij} := \frac{\sum_{k=1}^n\mathbb{I}(y_{ki}=1 \wedge y_{kj}=1)}{\sum_{k=1}^n\mathbb{I}(y_{ki}=1) + \sum_{k=1}^n\mathbb{I}(y_{kj}=1)},
\end{split}
\end{equation*}
where $\mathbb{I}(\cdot)$ is an indicator function returning $1$ if the event is true and $0$ otherwise. It is worth noting that HALE simply calculates the average co-occurrence by dividing the instance number, which overlooks the existence of false-positive labels. In our method, we penalize it by the counts of instances that are assigned the labels as the candidates. In other words, if one label is assigned to too many instances, its co-occurrence statistics are less reliable.

Next, we impose the instance-level similarity property on the model output by means of a Laplacian regularizer,
\begin{equation}\label{instance-sim}
\begin{split}
\mathcal{J}_x(\hat{\bm{Y}})&=\sum_{i,j=1}^n a_{ij}^x||\frac{\hat{\bm{y}}_i}{\sqrt{d^x_{i}}}-\frac{\hat{\bm{y}}_j}{\sqrt{d^x_{j}}}||^2 =\trace(\hat{\bm{Y}}^\top\bm{L}_x\hat{\bm{Y}}),\\
\end{split}
\end{equation}
where $\hat{\bm{Y}}=[\hat{\bm{y}}_1,\ldots,\hat{\bm{y}}_n]^\top$ is the predicted label matrix. $\bm{L}_x=\bm{I}-\bm{D}_x^{-\frac{1}{2}}\bm{A}^x\bm{D}_x^{-\frac{1}{2}}$ is the Laplacian matrix of instance graph. $\bm{D}_x=\text{diag}[d_{1}^x,\ldots,d_{n}^x]$ is a diagonal matrix with $d_{i}^x=\sum_{j=1}^n a^x_{ij}$. The objective in Eq. (\ref{instance-sim}) enforces two data examples to have similar predicted labels if they have a large affinity score. Similarly, we can introduce the label-level similarity by,
\begin{equation*}
\begin{split}
\mathcal{J}_y(\hat{\bm{Y}})=\trace(\hat{\bm{Y}}\bm{L}_y\hat{\bm{Y}}^\top), \quad \bm{L}_y=\bm{I}-\bm{D}_y^{-\frac{1}{2}}\bm{A}^y\bm{D}_y^{-\frac{1}{2}}.\\
\end{split}
\end{equation*}

Finally, we involve a candidate consistency term to get the following objective,
\begin{equation*}
\begin{split}
\mathcal{J}(\hat{\bm{Y}},\bm{Y})=\frac{\eta}{2}||\hat{\bm{Y}}-\bm{Y}||_F^2+\frac{\alpha}{2}\mathcal{J}_x(\hat{\bm{Y}})+\frac{\beta}{2}\mathcal{J}_y(\hat{\bm{Y}}),
\end{split}
\end{equation*}
where $||\cdot||_F$ is the Frobenius norm. $\eta$, $\alpha$, and $\beta$ are all positive hyperparameters.

In summary, the main goal of our objective is to utilize the instance- and label-level similarity to recover reliable label confidences and address the problem of false positives, while simultaneously enjoying an extra performance improvement by exploiting label correlations.

%\mathcal{L}(\hat{\bm{Y}}, \bm{Y})=||\hat{\bm{Y}}-\bm{Y}||_F^2+
%J_y(\hat{\bm{Y}})&=\sum_{i,j=1}^L a_{ij}^y||\frac{\hat{\bm{Y}}_{\cdot i}}{\sqrt{d^y_{ii}}}-\frac{\hat{\bm{Y}}_{\cdot j}}{\sqrt{d^y_{jj}}}||^2=\trace(\hat{\bm{Y}}\bm{L}_y\hat{\bm{Y}}^\top),

\subsection{Deep Propagation Network}\label{deep-models}
If a linear model is adopted, the parameters can be directly obtained by optimizing $\mathcal{J}(f(\bm{X}),\bm{Y})$ with gradient descent. However, in deep models, it is not the case. When we update the model using a mini-batch of data, the instance-level manifold regularizer can involve unexpected gradient updating on the nearest neighbors of these data. Then, the training time will increase rapidly. To remedy this problem, we propose an efficient optimization framework for training a deep model from PML data. Specifically, we maintain a smooth pseudo-label matrix $\bm{Z}$, which minimizes the propagation error while being close to the model output $\hat{\bm{Y}}$. 
In the sequel, we elaborate the training procedure of our model. The training procedure is also summarized in Algorithm \ref{algorithm-details}. 

\subsubsection{Propagation Step}
At $t$-th epoch, we first propagate the label information from candidates $\bm{Y}$ and the model output $\hat{\bm{Y}}$ to the intermediate ones. In graph-based methods, the output label is required to be smooth enough. Therefore, we use the Frobenius norm as the measure to ensure the closeness of $\bm{Z}$ and $\hat{\bm{Y}}$. Formally, we estimate $\bm{Z}$ by,
\begin{equation}\label{Z-error}
\begin{split}
\min_{\bm{Z}}\mathcal{L}_{\text{LP}}(\bm{Z}) =\frac{1}{2}||\bm{Z}-\hat{\bm{Y}}||_F^2+\mathcal{J}(\bm{Z},\bm{Y}).
\end{split}
\end{equation}
We can derive its gradient by,
\begin{equation}\label{gradient}
\begin{split}
\frac{\partial \mathcal{L}_{\text{LP}}}{\partial\bm{Z}}=(1+\eta)\bm{Z}+\alpha\bm{L}_x\bm{Z}+\beta\bm{Z}\bm{L}_y-(\hat{\bm{Y}}+\eta\bm{Y}).
\end{split}
\end{equation}
which is followed by the matrix properties that $\frac{\partial||\bm{Z}-\bm{Y}||_F^2}{\partial\bm{Z}}=2(\bm{Z}-\bm{Y})$ and $\frac{\partial \text{tr}(\bm{Z}^\top \bm{L}_x\bm{Z})}{\partial\bm{Z}}=(\bm{L}_x+\bm{L}_x^\top)\bm{Z}$. We have $\bm{L}_x+\bm{L}_x^\top=2\bm{L}_x$ because $\bm{L}_x$ is symmetric. The optimal condition $\frac{\partial \mathcal{L}_{\text{LP}}}{\partial\bm{Z}}=\bm{0}$ typically requires solving a Sylvester system, which can be time-consuming. Therefore, we solve this problem by gradient descent $\bm{Z}=\bm{Z}-\gamma\frac{\partial \mathcal{L}_{\text{LP}}(\bm{Z})}{\partial\bm{Z}}$,
%\begin{equation}
%\begin{split}
%\bm{Z}=\bm{Z}-\gamma\frac{\partial \mathcal{L}_{\text{LP}}(\bm{Z})}{\partial\bm{Z}},
%\end{split}
%\end{equation}
where $\gamma>0$ is the learning rate. Note that in MLL datasets, negative labels usually dominate, which makes all the pseudo-labels tend to concentrate at a small value. To cope with this issue, we further perform column-wise min-max normalization on $\bm{Z}_t$ to obtain a balanced output.

\subsubsection{Model Updating Step}\label{model_update}
Given an instance vector $\bm{x}$, we first feed it into a deep neural model $\hat{\bm{y}}=f(\bm{x};\theta)$. Then, we update the model parameters to fit the pseudo-labels,
\begin{equation}\label{deep-obj}
\begin{split}
\min_\theta \mathcal{L}_{\text{Deep}}(\hat{Y}, \bm{Z})=\frac{1}{n}\sum\nolimits_{i=1}^n l(\hat{\bm{y}}_i, \bm{z}_i),
\end{split}
\end{equation}
where $l(\cdot,\cdot)$ is the loss function. Recall that in Eq. (\ref{Z-error}), we enforce $\bm{Z}$ to be close to $\hat{\bm{Y}}$ using the Frobenius norm. Hence, a natural choice of loss is to reuse the mean squared error (MSE), which is adopted by most graph-based PML algorithms \cite{DBLP:conf/ijcai/Wang0ZZHC19,DBLP:conf/aaai/0009LG20,zml-pami-pml}. However, in PML tasks, the disambiguated labels can still be noisy and the MSE loss leads to heavy overfitting. This is largely overlooked by previous works.

To tackle this problem, we shall have a closer look at the risk function. While MSE penalize too much on prediction mistakes, other robust surrogate losses can be considered, such as mean absolute error (MAE). However, we notice that MAE violates our smooth prediction assumption. To this end, we adopt the widely-used binary cross-entropy loss (BCE) for multi-label learning,
\begin{equation*}
\begin{split}
l(\hat{\bm{y}}_i, \bm{z}_i)\!=\!\sum\nolimits_{j\!=\!1}^L \!-\!z_{ij}\log(\sigma(\hat{y}_{ij}))\!-\!(1\!-\!z_{ij})\log(1\!-\!\sigma(\hat{y}_{ij})),
\end{split}
\end{equation*}
where $\sigma(\cdot)$ is the sigmoid function. On one hand, BCE loss approximately grows linearly for largely negative values, which makes it less sensitive to outliers. On the other hand, BCE outputs relatively smooth output such that the smoothness is preserved.
%In Fig. \ref{loss-comparison}, we present a quick comparison between models trained with these losses, fitting the candidate labels directly. The results clearly verify our conjectures.

In effect, we can go further on the risk. A plethora of works \cite{DBLP:conf/aaai/GhoshKS17,DBLP:conf/icml/FengK000S20} has demonstrated that BCE is still less robust since it is unbounded. Let us consider the probabilistic output $\sigma(\hat{\bm{y}}_i)$ of the deep model. Then the BCE loss is equivalent to a negative log loss on $\sigma(\hat{\bm{y}}_i)$. If the predicted probability of the label is close to zero, the loss can grow quickly to infinity. Previous works have suggested that bounded loss on the probabilistic output is theoretically more robust than unbounded ones. One common choice of bounded loss is the probabilistic MSE loss (PMSE), i.e. $l(\hat{\bm{y}}_i, \bm{z}_i)=||\sigma(\hat{\bm{y}}_i)-\bm{z}_i||_2^2$. In that case, the training criterion is consistent with our propagation error. Empirically, both PMSE and BCE loss perform well, and thus, we instantiate our model using BCE loss when comparing it with other methods. In Section \ref{loss-comparison}, we present a comparison between these losses, which clearly verifies our conjectures.

\paragraph{Fast Implementation.} To efficiently build the instance graph from training data, we employ a quick similarity search python package Faiss \cite{JDH17} to find the nearest neighbors. The sparse matrix multiplication $\bm{L}_x\bm{Z}$ can be efficiently implemented by the Scipy package. In Appendix A, we provide a detailed analysis of the complexity and convergence properties to show the efficacy of our method.

\renewcommand{\algorithmicrequire}{\textbf{Input:}}
\renewcommand{\algorithmicensure}{\textbf{Output:}}
\begin{algorithm}[!t]
  \caption{The pseudo-code of PLAIN}
  \label{algorithm-details}
    \begin{algorithmic}[1]
            \ENSURE The deep MLL network parameter $\theta$
            \REQUIRE Training dataset $\mathcal{D}$, instance graph $\bm{A}^x$, label graph $\bm{A}^y$, hyperparameters $\gamma$, $\alpha$, $\beta$ and $\eta$, iteration number $T$
            \STATE $\bm{Z}\leftarrow \bm{Y}$
            \STATE $\bm{L}_x\leftarrow\bm{I}-\bm{D}_x^{-\frac{1}{2}}\bm{A}^x\bm{D}_x^{-\frac{1}{2}}$
            \STATE $\bm{L}_y\leftarrow\bm{I}-\bm{D}_y^{-\frac{1}{2}}\bm{A}^y\bm{D}_y^{-\frac{1}{2}}$
      \FOR {epoch $t = 1,2,3,\ldots$}
                \FOR {batch $j = 1,2,\ldots$ in $\mathcal{D}$}
                    \STATE Sample a mini-batch data $(\bm{X}^j, \bm{Z}^j)$
                    \STATE $\hat{\bm{Y}}^j\leftarrow f(\bm{X}^j;\theta)$
                    \STATE $\theta\leftarrow\text{OPTIMIZE}(l(\hat{\bm{Y}}^j, \bm{Z}^j;\theta))$
                \ENDFOR
                \STATE $\hat{\bm{Y}} \leftarrow f(\bm{X};\theta)$
                \FOR {iter $i = 1,2,\ldots,T$}
                    \STATE $\bm{Z}\leftarrow\bm{Z}-\gamma\frac{\partial \mathcal{L}_{\text{LP}}(\bm{Z})}{\partial\bm{Z}}$
                \ENDFOR
            \ENDFOR
    \end{algorithmic}
\end{algorithm}

\begin{table*}[!t]
\centering
\normalsize
\addtolength{\tabcolsep}{-1.8pt}
\caption{Comparison of PLAIN with baselines on real-world datasets, where the best results are shown in bold face.}
\label{table_realworld_results}
\begin{threeparttable}
\begin{tabular}{c|ccccccc}
\cline{1-8}
\toprule
\textbf{Datasets} &\textbf{PLAIN} &\textbf{PML-NI} &\textbf{HALE} &\textbf{ML-KNN} &\textbf{GLOCAL}  &\textbf{PML-lc} &\textbf{PARTICLE} \\
\midrule
&\multicolumn{7}{c}{{Ranking Loss (the lower the better)}} \\
\midrule
Music-emotion     &\textbf{0.217$\pm$0.006}  &0.244$\pm$0.007&0.299$\pm$0.008   &0.365$\pm$0.010 &0.344$\pm$0.025    &0.310$\pm$0.010     &{0.261$\pm$0.007}  \\
Music-style       &\textbf{0.136$\pm$0.005}  &0.138$\pm$0.009&0.271$\pm$0.015   &0.229$\pm$0.010 &0.253$\pm$0.007    &0.269$\pm$0.016     &0.351$\pm$0.014  \\
Mirflickr         &\textbf{0.088$\pm$0.006}  &0.123$\pm$0.004&{0.143$\pm$0.009} &0.178$\pm$0.013 &0.212$\pm$0.003    &0.206$\pm$0.008     &0.213$\pm$0.011 \\
\midrule
          &\multicolumn{7}{c}{{Average Precision (the higher the better)}} \\
\midrule
Music-emotion     &\textbf{0.664$\pm$0.007}  &0.610$\pm$0.011&0.543$\pm$0.005    &0.505$\pm$0.010  &0.513$\pm$0.010      &0.539$\pm$0.013     &{0.626$\pm$0.011} \\
Music-style       &\textbf{0.742$\pm$0.012}  &0.740$\pm$0.013&{0.687$\pm$0.013}  &0.659$\pm$0.014  &0.645$\pm$0.007      &0.598$\pm$0.009     &0.621$\pm$0.016   \\
Mirflickr         &\textbf{0.847$\pm$0.012}  &0.792$\pm$0.004&{0.718$\pm$0.005}  &0.698$\pm$0.013  &0.679$\pm$0.007      &0.675$\pm$0.010     &0.690$\pm$0.012  \\
\midrule
          &\multicolumn{7}{c}{{Hamming Loss (the lower the better)}} \\
\midrule
Music-emotion     &\textbf{0.191$\pm$0.004}  &0.254$\pm$0.013&0.219$\pm$0.003     &{0.364$\pm$0.011} &0.219$\pm$0.001    &0.221$\pm$0.008     &{0.206$\pm$0.003}   \\
Music-style       &\textbf{0.112$\pm$0.003}  &0.157$\pm$0.010&{0.122$\pm$0.005}   &{0.844$\pm$0.010} &0.144$\pm$0.001    &0.162$\pm$0.006     &0.137$\pm$0.007   \\
Mirflickr         &{0.162$\pm$0.003}  &0.223$\pm$0.005&\textbf{0.156$\pm$0.003}   &0.217$\pm$0.006   &0.253$\pm$0.001    &0.193$\pm$0.006     &0.179$\pm$0.008  \\

\bottomrule
\end{tabular}
\end{threeparttable}
\end{table*}

\begin{table*}[!t]
\centering
\normalsize
\addtolength{\tabcolsep}{-3.4pt}
\caption{Win/tie/loss counts of PLAIN's performance against baselines on synthetic data sets (pairwise $t$-test at 0.05 significance level).}
\label{wintieloss}
\begin{tabular}{c|ccccccccc|c}
\cline{1-11}
\toprule
\textbf{Metrics}          &\textbf{Emotions} &\textbf{Birds}    &\textbf{Medical}  &\textbf{Image}    &\textbf{Corel5k}    &\textbf{Bibtext}  &\textbf{Eurlex-dc}  &\textbf{Eurlex-sm} & \textbf{NUS-WIDE} & \textbf{Sum} \\ \midrule
Ranking Loss&18/0/0&15/0/3&15/0/3&15/1/2&18/0/0&18/0/0&15/0/3&14/4/0&16/0/2&144/5/13\\
Average Precision&18/0/0&16/1/1&15/0/3&18/0/0&16/0/2&16/1/1&18/0/0&18/0/0&18/0/0&153/2/7\\
Hamming Loss&15/1/2&14/1/3&11/4/3&11/1/6&5/13/0&11/7/0&8/10/0&14/4/0&14/4/0&103/45/14   \\
\midrule
\textbf{Sum}      &51/1/2&45/2/7&41/4/9&44/2/8&39/13/2&45/8/1&41/10/3&46/8/0&48/4/2&400/52/34    \\ \bottomrule
\end{tabular}
\end{table*}

\section{Experiments}\label{experiments}
In this section, we present the main empirical results of PLAIN compared with state-of-the-art baselines. More experimental results are shown in Appendix B. 

\subsection{Datasets}
We employe a total of twelve synthetic as well as real-world datasets for comparative studies. Specifically, the synthetic ones are generated from the widely-used multi-label learning datasets. A total of nine benchmark multi-label datasets\footnote{\href{http://mulan.sourceforge.net/datasets-mlc.html}{\color{blue}{http://mulan.sourceforge.net/datasets-mlc.html}}} are used for synthetic PML datasets generation, including \textsf{Emotions}, \textsf{Birds}, \textsf{Medical}, \textsf{Image}, \textsf{Bibtex}, \textsf{Corel5K}, \textsf{Eurlex-dc}, \textsf{Eurlex-sm}, \textsf{NUS-WIDE}. Note that \textsf{NUS-WIDE} is a large-scale dataset, and we preprocess it as in \cite{DBLP:conf/kdd/LyuFL20}. For each data example in MLL datasets, we randomly select $r$ irrelevant labels and aggregate them with the ground-truth labels to obtain a candidate set. For example, given an instance $\bm{x}_i$ and its ground-truth label set $\tilde{S}_i$, we select $r$ labels from its complementary set $\mathcal{Y}-\tilde{S}_i$. If there are less than $r$ irrelevant labels, i.e. $|\mathcal{Y}-\tilde{S}_i|<r$, we simply set the whole training set $\mathcal{Y}$ as the candidate set for $\bm{x}_i$. Following the experimental settings in \cite{DBLP:conf/kdd/LyuFL20}, we choose $r\in\{1,2,3\}$, resulting in $27$ $(3\times9)$ synthetic datasets. Furthermore, we conducted experiments on three real-world PML datasets \cite{zml-pami-pml}, including \textsf{Music-emotion}, \textsf{Music-style}, \textsf{Mirflickr}. These three datasets are derived from the image retrieval task \cite{DBLP:conf/mir/HuiskesL08}, where the candidate labels are collected from web users and then further verified by human annotators to determine the ground-truth labels. %Detailed statistics of the datasets can be found in Appendix A.

%\begin{table}
%\normalsize
%\caption{Friedman statistics $\tau_F$ in terms of each evaluation
%metric.}
%\centering
%\label{table-f-statics}
%\begin{tabular}{c|c|c}
%\cline{1-3}
%\toprule
%\textbf{Metrics}  &\textbf{$\tau_F$} &\textbf{Critical Value} \\ \midrule
%Ranking Loss &27.10        & \multirow{3}*{\shortstack{2.15\\ (Methods: 7, Data sets: 30)} }\\
%Average Precision &26.06 &\\
%Hamming Loss   &4.04   &\\
%\bottomrule
%\end{tabular}
%\end{table}

\subsection{Baselines and Implementation Details}
We choose six benchmark methods for comparative studies, including two MLL methods ML-KNN \cite{DBLP:journals/pr/ZhangZ07}, GLOCAL \cite{DBLP:journals/tkde/ZhuKZ18}, and four state-of-the-art PML methods PML-NI \cite{DBLP:journals/pami/XieH22}, PARTICLE \cite{zml-pami-pml}, HALE \cite{DBLP:conf/kdd/LyuFL20}, and PML-lc \cite{DBLP:conf/aaai/XieH18}. In particular, PARTICLE and HALE are two advanced graph-based PML methods. PARTICLE \cite{zml-pami-pml} adopts an instance-level label propagation algorithm to disambiguate the candidate labels. HALE \cite{DBLP:conf/kdd/LyuFL20} regards the denoising procedure as a graph-matching procedure.

The parameter setups for the used methods are as follows. We fix $k=10$ for all $k$NN-based methods, including PLAIN. For the baselines, we fix or fine-tune the hyperparameters as the suggested configurations in respective papers.
For our PLAIN method, the deep model is comprised of three fully-connected layers. The hidden sizes are set as $[64, 64]$ for those datasets with less than $64$ labels, and $[256, 256]$ for those datasets with more than $64$ and less than $256$ labels. For the remaining datasets, we set the hidden sizes as $[512, 512]$. The trade-off parameters $\alpha$ and $\beta$ are hand-tuned from $\{0.001,0.01,0.1\}$. $\eta$ is selected from $\{0.1,1,10\}$. Following \cite{DBLP:conf/cvpr/IscenTAFC17}, we set $\rho=3$. We train our deep model via stochastic gradient descent and empirically set the learning rate as $0.01$ for both propagation and deep model training procedures. The number of maximum iterations is set as $T=200$ for small-scale datasets and $T=50$ for the large-scale NUS-WIDE dataset. Besides, weight decay is applied with a rate of $5e^{-5}$ to avoid overfitting.
%The hyperparameters are selected via validation on 10\% PML training data and the best parameters are used for further evaluation.

For performance measure, we use three widely-used multi-label metrics, \textit{ranking loss}, \textit{average precision} and \textit{hamming loss} \cite{DBLP:journals/tkde/ZhangZ14}. Finally, we perform ten-fold cross-validation on each dataset and the mean metric values as well as the standard deviations are reported.

\begin{table*}[!t]
\centering
\small
\addtolength{\tabcolsep}{0.8pt}
\caption{Comparison of PLAIN with baselines on nine synthetic PML datasets ($r$ = 3), where the best results are shown in bold face.}
\label{table_main_results3}
\begin{threeparttable}
\begin{tabular}{c|ccccccc}
\cline{1-8}
\toprule
\textbf{Datasets} &\textbf{PLAIN} &\textbf{PML-NI} &\textbf{HALE} &\textbf{ML-KNN} &\textbf{GLOCAL}  &\textbf{PML-lc} &\textbf{PARTICLE} \\\midrule
&\multicolumn{7}{c}{{Ranking Loss (the lower the better)}} \\
\midrule
Emotions    &\textbf{0.158$\pm$0.024}  &0.214$\pm$0.029&{0.235$\pm$0.037}   &0.241$\pm$0.026 &0.322$\pm$0.053 &0.459$\pm$0.035 &0.259$\pm$0.019 \\
Birds       &0.205$\pm$0.036  &\textbf{0.187$\pm$0.035}&{0.271$\pm$0.061}   &0.304$\pm$0.048 &0.302$\pm$0.041 &0.321$\pm$0.021 &0.301$\pm$0.032 \\
Medical     &0.050$\pm$0.011  &\textbf{0.039$\pm$0.013}&0.169$\pm$0.025     &0.088$\pm$0.019 &0.068$\pm$0.005 &0.056$\pm$0.012 &0.100$\pm$0.021 \\
Image       &\textbf{0.190$\pm$0.015}  &0.289$\pm$0.018&{0.192$\pm$0.015}   &0.342$\pm$0.026 &0.264$\pm$0.010 &0.467$\pm$0.025 &0.315$\pm$0.073 \\
Corel5K     &\textbf{0.120$\pm$0.004}  &0.205$\pm$0.006&{0.259$\pm$0.011}   &0.139$\pm$0.007 &0.164$\pm$0.003 &{0.169$\pm$0.013} &{0.333$\pm$0.050}  \\
Bibtext     &\textbf{0.079$\pm$0.005}  &0.126$\pm$0.010&0.601$\pm$0.009     &0.232$\pm$0.006 &0.131$\pm$0.006 &0.342$\pm$0.005 &0.287$\pm$0.010 \\
Eurlex-dc   &0.033$\pm$0.002  &\textbf{0.030$\pm$0.003}&{0.079$\pm$0.002}   &0.086$\pm$0.012 &0.150$\pm$0.003 &0.071$\pm$0.013 &{0.061$\pm$0.023} \\
Eurlex-sm   &\textbf{0.027$\pm$0.001}  &0.028$\pm$0.002&{0.040$\pm$0.002}   &0.043$\pm$0.003 &0.071$\pm$0.003 &0.082$\pm$0.013 &0.053$\pm$0.002  \\
NUS-WIDE    &\textbf{0.211$\pm$0.003}  &{0.221$\pm$0.002}&{0.239$\pm$0.012}   &0.301$\pm$0.011 &0.311$\pm$0.020 &-&0.240$\pm$0.015 \\
\midrule
          &\multicolumn{7}{c}{{Average Precision (the higher the better)}} \\
\midrule
Emotions    &\textbf{0.812$\pm$0.043}  &0.754$\pm$0.016&{0.751$\pm$0.035}   &0.741$\pm$0.029 &0.647$\pm$0.030 &0.573$\pm$0.025 &0.745$\pm$0.024 \\
Birds       &\textbf{0.606$\pm$0.037}  &0.580$\pm$0.062&{0.505$\pm$0.054}   &0.453$\pm$0.052 &0.390$\pm$0.060 &0.388$\pm$0.035 &0.431$\pm$0.051 \\
Medical     &0.781$\pm$0.039  &\textbf{0.849$\pm$0.035}&{0.769$\pm$0.023}   &0.672$\pm$0.039 &0.751$\pm$0.009 &0.713$\pm$0.012 &0.720$\pm$0.044 \\
Image       &\textbf{0.774$\pm$0.020}  &0.668$\pm$0.018&{0.762$\pm$0.016}   &0.627$\pm$0.022 &0.692$\pm$0.008 &0.523$\pm$0.019 &0.689$\pm$0.096 \\
Corel5K     &\textbf{0.319$\pm$0.010}  &0.289$\pm$0.011&{0.231$\pm$0.008}   &0.251$\pm$0.007 &0.284$\pm$0.007 &{0.247$\pm$0.012} &{0.135$\pm$0.041}  \\
Bibtext     &\textbf{0.550$\pm$0.014}  &0.537$\pm$0.012&{0.353$\pm$0.010}   &0.306$\pm$0.006 &0.438$\pm$0.011 &0.283$\pm$0.010 &0.313$\pm$0.012 \\
Eurlex-dc   &\textbf{0.740$\pm$0.008}  &0.700$\pm$0.011&{0.673$\pm$0.006}   &0.603$\pm$0.012 &0.278$\pm$0.005 &0.602$\pm$0.012 &0.630$\pm$0.016 \\
Eurlex-sm   &\textbf{0.802$\pm$0.007}  &0.718$\pm$0.011&{0.739$\pm$0.013}   &0.761$\pm$0.012 &0.556$\pm$0.008 &0.582$\pm$0.013 &0.695$\pm$0.012 \\
NUS-WIDE    &\textbf{0.286$\pm$0.004}  &0.274$\pm$0.003&{0.230$\pm$0.007}   &0.171$\pm$0.015 &0.177$\pm$0.024 &- &0.206$\pm$0.017\\
\midrule
          &\multicolumn{7}{c}{{Hamming Loss (the lower the better)}} \\
\midrule
Emotions    &\textbf{0.223$\pm$0.033}   &0.455$\pm$0.059&{0.297$\pm$0.071}          &{0.607$\pm$0.209}          &0.281$\pm$0.015 &0.437$\pm$0.027 &0.233$\pm$0.018 \\
Birds       &{0.088$\pm$0.010}          &0.095$\pm$0.013&0.096$\pm$0.009            &\textbf{0.053$\pm$0.006}   &0.096$\pm$0.008 &0.132$\pm$0.012 &0.142$\pm$0.018 \\
Medical     &{0.019$\pm$0.001}          &\textbf{0.014$\pm$}0.002&0.017$\pm$0.001   &{0.022$\pm$0.002}          &0.028$\pm$0.001  &0.063$\pm$0.003 &0.024$\pm$0.002 \\
Image       &{0.207$\pm$0.009}          &0.273$\pm$0.028&\textbf{0.189$\pm$0.017}   &{0.753$\pm$0.005}          &0.237$\pm$0.009&0.443$\pm$0.014 &0.403$\pm$0.042 \\
Corel5K     &\textbf{0.009$\pm$0.000}   &0.011$\pm$0.000&{0.014$\pm$0.001}          &0.009$\pm$0.000            &0.009$\pm$0.000 &{0.019$\pm$0.002} &\textbf{0.009$\pm$0.000}  \\
Bibtext     &\textbf{0.013$\pm$0.000}   &0.015$\pm$0.000&0.016$\pm$0.001            &{0.014$\pm$0.000}          &0.015$\pm$0.001 &0.021$\pm$0.001 &0.017$\pm$0.000 \\
Eurlex-dc   &\textbf{0.002$\pm$0.001}   &0.010$\pm$0.001&{0.004$\pm$0.003}          &{0.009$\pm$0.005}          &\textbf{0.002$\pm$0.000} &0.013$\pm$0.005 &0.004$\pm$0.000 \\
Eurlex-sm   &\textbf{0.006$\pm$0.002}   &0.011$\pm$0.001& 0.008$\pm$0.005           &{0.012$\pm$0.001}          &0.011$\pm$0.000 &0.019$\pm$0.002 &0.010$\pm$0.001 \\
NUS-WIDE    &\textbf{0.021$\pm$0.000}   &0.035$\pm$0.002&0.031$\pm$0.008            &{0.049$\pm$0.016}          &0.022$\pm$0.001  &-&{0.290$\pm$0.006}\\
\bottomrule
\end{tabular}
\begin{tablenotes}
\footnotesize
\item[$\sharp$] - means over long time consumption, and thus, the experimental results are not reported.
\end{tablenotes}
\end{threeparttable}
\end{table*}

\subsection{Experimental Results}
Table \ref{table_realworld_results} reports the experimental results on real-world datasets. Table \ref{table_main_results3} lists the results on synthetic datasets, where the parameter is configured with $r=3$. The results of $r=1,2$ are consistent with those of $r=3$ and thus are omitted. To better understand the superiority of the proposed method, we summarize the win/tie/loss counts on all synthetic datasets in Table \ref{wintieloss}. 
We also report the statistical testing results in Appendix B for analyzing the relative performance amongst competing algorithms. 
Based on these results, we can draw the following observations:
\begin{itemize}
    \item The proposed PLAIN method significantly outperforms all other competing approaches. For example, on the Music-emotion and Mirflicker datasets, in terms of average precision, PLAIN achieved results superior to those of the best baselines by $\bm{6.07}$\% and $\bm{6.94}$\% respectively.
    \item According to Table \ref{wintieloss}, out of 486 statistical comparisons, six competing algorithms, and three evaluation metrics, PLAIN beats the competing methods in $\bm{82.30}$\% cases. In particular, PLAIN wins or ties the baselines in $\bm{95.68}$\% cases regarding average precision.
    \item ML-KNN and GLOCAL obtain inferior performance on many datasets, such as Medical and Eurlex-dc. The reason is that these tailored MLL models regard all the candidate labels as the ground-truth and thus tend to overfit the false-positive labels.
    \item Two PML methods PML-NI and HALE are strong baselines. In particular, PML-NI achieved the second-best results on most datasets. However, PLAIN performs significantly better than them, since their performance is restricted to the linear classifier.
    \item On large-scale dataset NUS-WIDE, PLAIN still outperforms other methods on all evaluation metrics. It demonstrates the effectiveness of PLAIN in handling high-volume of data.
\end{itemize}

\subsection{Further Analysis}\label{loss-comparison}
%In this subsection, we analyze the proposed PLAIN model from different aspects to further show the efficiency, robustness and effectiveness of PLAIN.

\begin{figure*}[t]
\begin{center}
    \subfigure[Parameter sensitivity of $\gamma$]{
        \includegraphics[width=0.275\linewidth]{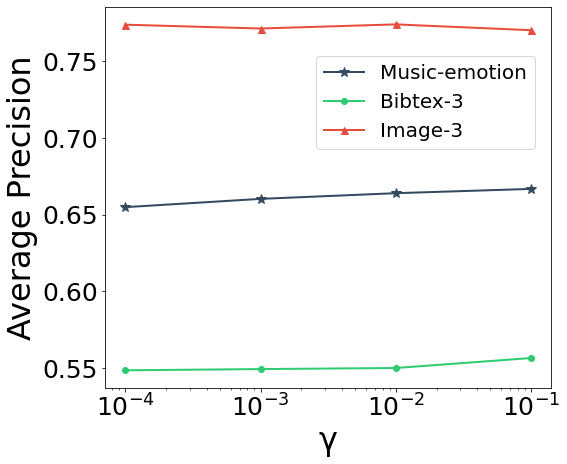}
    }
    \subfigure[Parameter sensitivity of $T$]{
        \includegraphics[width=0.275\linewidth]{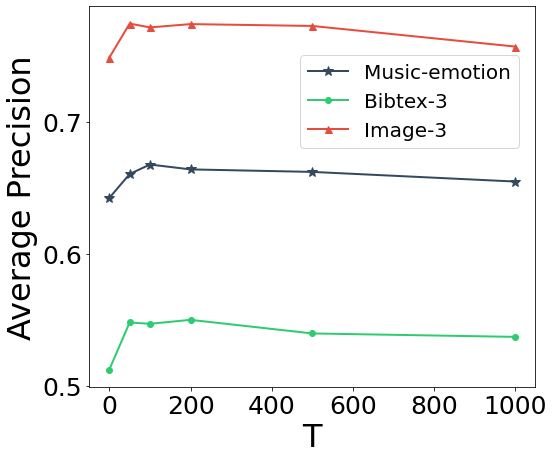}
    }
    \subfigure[Analysis of risk functions]{
        \includegraphics[width=0.275\linewidth]{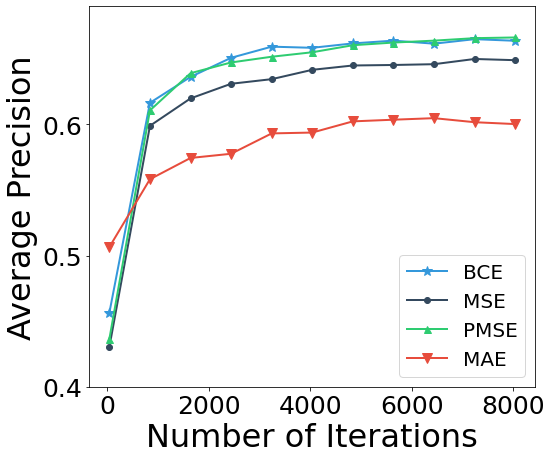}
    }
\vskip -0.1in
\end{center}
   \caption{Analysis of parameter sensitivities. (a) Changes in the performance of PLAIN as $\gamma$ changes; (b) Changes in the performance of PLAIN as $T$ changes; (c) Convergence curves of PLAIN with different risk functions on Music-emotion dataset. }
\label{fig:additional_exp}
\end{figure*}

\begin{figure}[t]
\begin{center}
    \includegraphics[width=0.48\linewidth]{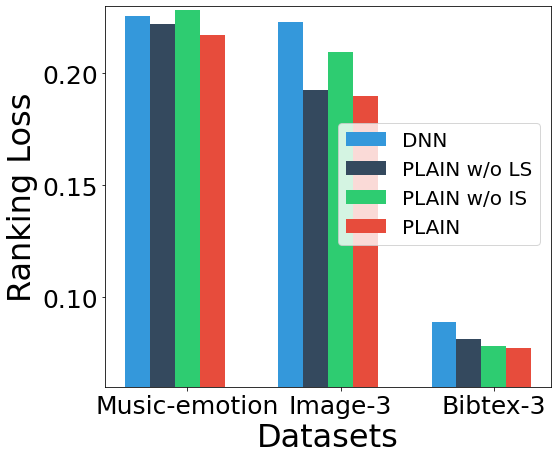}
    \includegraphics[width=0.48\linewidth]{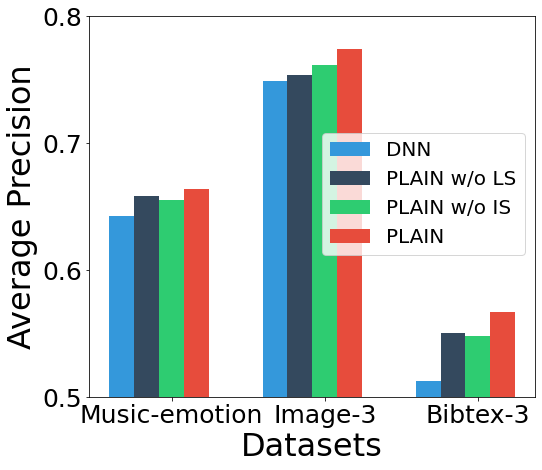}
\end{center}
   \caption{Ablation study on three datasets. We set $r=3$ for Image and Bibtex. Notably, the lower ranking loss is the better and so is the hamming loss. For Average Precision, the higher is the better. }
\label{fig:ablation}
\end{figure}

\paragraph{Complexity Analysis}
%In section \ref{comp_analysis}, we have theoretically analyzed the time complexity of PLAIN, which requires empirical verification. Moreover, it remains unclear whether a few gradient descent steps ($T$) are enough to obtain satisfactory results. It is also of interest to see the sensitivity of propagation learning rate $\gamma$, since it is complementary to $T$ regarding the convergence rate.

In Table \ref{runningtime}, we report the running time of different training stages of PLAIN, including the graph building step, label propagation step (one epoch) and the deep model training step (one epoch), on datasets with various scales. First of all, we can see that the instance graph can be fast built in a few microseconds on all datasets, with the help of Faiss package \cite{JDH17}. Moreover, by utilizing the sparse property of instance graph and the fast computation ability of GPUs, the time complexity of label propagation is at the same magnitude as that of deep model training. In particular, on those datasets with fewer labels, such as Music-emotion and Image, the propagation phase is as fast as the deep model traveling the whole training dataset. In Figure \ref{fig:additional_exp}, (a) and (b) show the parameter sensitivities of PLAIN to $T$ and $\gamma$. We can observe that PLAIN is stable with varying values of $T$ and $\gamma$. Specifically, PLAIN obtains high average precision even with small $T$ and small $\gamma$. But when $T=0$ (without label propagation), our PLAIN model degenerates to the trivial DNN model and thus, the performance degrades. It should also be noted that when $T$ is too large, the performance of PLAIN slightly drops because the model tends to overfit. In summary, the proposed PLAIN model requires only a few gradient steps in the label propagation phase and can be efficiently trained.

\paragraph{Comparison between Loss Functions}
As discussed in section \ref{model_update}, mean square error (MSE) is error-prone to the label noise. Hence, we conducted experiments to compare the performance of different loss functions, including MSE, MAE, BCE and probabilistic MSE (PMSE). The results are shown in Figure \ref{fig:additional_exp} (c). We can see that MAE demonstrates inferior performance and low convergence. Despite its robustness, it tends to produce harsh predictions and is opposite to the graph techniques which generate smooth outputs. MSE obtains good performance, but underperforms BCE and PMSE, since it easily overfits the false-positive labels. Finally, both BCE and PMSE demonstrate good empirical results. These observations suggest that a robust loss function is essential in successful learning from PML data.

\begin{table}
\addtolength{\tabcolsep}{1pt}
\centering
\small
\caption{Running time (\textbf{in microseconds}) of PLAIN on six datasets with various data scales, regarding the graph building, label propagation (one epoch), and deep model training (one epoch) stages.}
\label{runningtime}
\begin{tabular}{c|c|c|c}
\cline{1-4}
\toprule
\textbf{Datasets}  & \makecell[c]{\textbf{Graph}\\\textbf{Building}}&\makecell[c]{\textbf{Label}\\\textbf{Propagation}} & \makecell[c]{\textbf{Model}\\\textbf{Training}}
\\ \midrule
Image & 1.0  & 150.2  & 154.4 \\
Medical & 1.1  & 344.0  & 112.2 \\
Music-emotion & 5.6  & 449.6  & 480.7 \\
Bibtex & 26.7  & 2498.7  & 600.8 \\
Eurlex-sm & 355.7  & 8911.2  & 1902.2 \\
NUS-WIDE & 1829.3  & 17255.2  & 11251.6 \\
\bottomrule
\end{tabular}
\end{table}

\paragraph{Ablation Analysis}
An ablation study would be informative to see how different components of the proposed PLAIN contribute to the prediction performance. Thus, we compare PLAIN with three variants: 1) a pure deep neural network model (DNN) that treats the candidate labels as valid ones; 2) the PLAIN model without considering label-level similarity (PLAIN w/o LS); 3) the PLAIN model without considering instance-level similarity (PLAIN w/o IS). Figure \ref{fig:ablation} reports the results on one real-world dataset as well as two synthetic datasets. In general, both instance- and label-level graph regularizers improve the simple DNN model on most datasets and evaluation metrics. Then, by integrating the two regularizers, our PLAIN model consistently achieves the best performance. These results clearly verify the superiority of the proposed method.

\section{Conclusion}
In this work, we proposed a novel deep partial multi-label learning model PLAIN that copes with several limitations in existing graph-based PML methods. At each training epoch, we first involve both the instance- and label-level similarities to generate pseudo-labels via label propagation. We then train the deep model to fit the disambiguated labels. Moreover, we analyzed the time complexity and the robustness of PLAIN from both theoretical and empirical perspectives. Comprehensive experiments on synthetic datasets as well as real-world PML datasets clearly validate the efficiency, robustness, and effectiveness of our proposed method. Our work demonstrates that combining the graph and deep models together leads to high expressiveness as well as good disambiguation ability. 
We hope our work will increase attention toward a broader view of graph-based deep PML methods.

%\section*{Ethical Statement}
%
%There are no ethical issues.
%
\section*{Acknowledgments}
This work is supported by the National Key R\&D Program of China (No. 2022YFB3304100) and by the Zhejiang University-China Zheshang Bank Co., Ltd. Joint Research Center. Gengyu Lyu is supported by the National Key R\&D Program of China (No. 2022YFB3103104) and the China Postdoctoral Science Foundation (No. 2022M720320).

%
%The preparation of these instructions and the \LaTeX{} and Bib\TeX{}
%files that implement them was supported by Schlumberger Palo Alto
%Research, AT\&T Bell Laboratories, and Morgan Kaufmann Publishers.
%Preparation of the Microsoft Word file was supported by IJCAI.  An
%early version of this document was created by Shirley Jowell and Peter
%F. Patel-Schneider.  It was subsequently modified by Jennifer
%Ballentine, Thomas Dean, Bernhard Nebel, Daniel Pagenstecher,
%Kurt Steinkraus, Toby Walsh, Carles Sierra, Marc Pujol-Gonzalez,
%Francisco Cruz-Mencia and Edith Elkind.

%% The file named.bst is a bibliography style file for BibTeX 0.99c
\bibliographystyle{named}
\bibliography{references}

\newpage

\appendix

\setcounter{figure}{3}
\setcounter{table}{4}

\section{Theoretical Analysis}

\subsection{Complexity Analysis}
It is worth noting that the propagation step can be efficient. First of all, we have disentangled the propagation step from the deep model training, which allows us to perform simple algebra operations. Moreover, since $\bm{Z}$ is close to both $\hat{\bm{Y}}$ and $\bm{Y}$, either one can be used to initialize $\bm{Z}$ and only a few gradient steps are required to get a satisfactory solution.

Theoretically, in Eq. (3), calculating the term $\bm{Z}\bm{L}_y$ requires $\mathcal{O}(nL^2)$ time. To compute $\bm{L}_x\bm{Z}$, since the instance graph is highly sparse, we need $\mathcal{O}(n*\text{nnz}(\bm{L}_x))=\mathcal{O}(n*(n+\text{nnz}(\bm{A}_x)))$, where $\text{nnz}(\cdot)$ denotes the number of non-zero entries in a matrix. In our data-driven graph-construction case, we have $\text{nnz}(\bm{A}_x)=nk$. Therefore, at each epoch, we require $\mathcal{O}(nT(L^2+n(k+1)))$ time in the graph propagation procedure, where $T$ is the number of gradient steps. In our implementation, we use the widely-used matrix computation package Scipy to perform sparse matrix multiplication $\bm{L}_x\bm{Z}$. The remaining calculations, including the deep model training, are done on GPUs using PyTorch. According to our empirical analysis, the graph propagation procedure is indeed as fast as the deep training and be efficiently computed. 

\subsection{Convergence Analysis}
It is also of interest to see the convergence performance of PLAIN method. Take the Probabilistic MSE risk as an example, the overall objective can be formulated as follows,
\begin{equation*}
\begin{split}
\mathcal{L}(\theta,\bm{Z})=||\bm{Z}-\sigma(f(\bm{X};\theta))||_F^2+\mathcal{J}(\bm{Z},\bm{Y})
\end{split}
\end{equation*}
with a constant factor ignored on the deep model objective. We further assume that the gradient descent algorithm is used as the $\text{OPTIMIZE}(\cdot)$ function in Algorithm 1 and the learning rates are selected properly. The following Proposition ensures the convergence of our method and proves the value of the objective function is non-increasing at each updating step.

\begin{table}[!t]
\small
\addtolength{\tabcolsep}{-2.6pt}
\centering
\caption{Characteristics of the experimental datasets. The last three PML datasets are real-world datasets.}
\label{data-stats}
\begin{threeparttable}
\begin{tabular}{ccccccc}
\cline{1-7}
\toprule
\textbf{Datasets}      &\textbf{EXPs}\tnote{\dag}      &\textbf{FEAs}        &\textbf{CLs}          &\textbf{M-GT}           &\textbf{A-GT}  & \textbf{DOM}\\ \midrule
Emotions      &593        &72           &6             &3               &1.87   & music\\
Birds         &645        &260          &19            &6               &1.01   & audio  \\
Medical       &978        &1,449        &45            &3               &1.25   & text  \\
Image         &2,000      &294          &5             &3               &1.23   & image  \\
Corel5K       &5,000      &499          &374           &5               &3.52   & image  \\
Bibtext       &7,395      &1,836        &159           &28              &2.40   & text  \\
Eurlex-dc     &19,348     &5,000        &412           &7               &1.29   & text  \\
Eurlex-sm     &19,348     &5,000        &201           &12              &2.21   & text  \\
NUS-WIDE\tnote{\dag}&133,441&500        &81            &20              &1.76   & image  \\ \midrule
Music-emotion &6,833      &98           &11            &7               &2.42   & music  \\
Music-style   &6,839      &98           &10            &4               &1.44   & music  \\
Mirflickr     &10,433     &100          &7             &5               &1.77   & image  \\ \bottomrule
\end{tabular}
\begin{tablenotes}
\footnotesize
\item[\ddag]  For each PML dataset, the number of examples (EXPs), features (FEAs), class labels (CLs), the maximum number of ground-truth labels (M-GT), the average number of ground-truth labels (A-GT) and its corresponding domain (DOM) are recorded.
\item[\dag] The original number of instances is 269,648 but some of them are unlabeled w.r.t the 81 class labels, thus we only utilized the remaining 133,441 instances to conduct experiments.
\end{tablenotes}
\end{threeparttable}
\end{table}

\begin{proposition}
The updating rules listed in Algorithm 1 guarantee finding a stationary point of the objective $\mathcal{L}(\theta,\bm{Z})$.
\end{proposition}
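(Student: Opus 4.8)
The plan is to view Algorithm 1 as a block-coordinate descent (alternating minimization) scheme on the joint objective $\mathcal{L}(\theta,\bm{Z})=\|\bm{Z}-\sigma(f(\bm{X};\theta))\|_F^2+\mathcal{J}(\bm{Z},\bm{Y})$, and to argue that each of the two blocks produces a non-increasing sequence of objective values that is bounded below, hence converges; a standard argument then upgrades this to convergence to a stationary point. First I would observe that $\mathcal{L}$ is bounded below by $0$ (both terms are sums of squared/quadratic-form quantities, and $\mathcal{J}$ is a sum of a Frobenius term and two Laplacian quadratic forms with PSD Laplacians, so $\mathcal{J}\ge 0$), so it suffices to prove monotone decrease.

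Next I would handle the two updating steps separately. For the $\bm{Z}$-step (lines 11--13): with $\theta$ fixed, $\mathcal{L}(\theta,\cdot)=\mathcal{L}_{\text{LP}}(\bm{Z})$ up to the constant $\|\hat{\bm{Y}}\|$-independent reshuffling already noted in Eq. (2), and this function is quadratic in $\bm{Z}$ with Hessian $(1+\eta)\bm{I}+\alpha\bm{L}_x\otimes\bm{I}+\beta\bm{I}\otimes\bm{L}_y$ (in vectorized form), which is symmetric positive definite since $\eta>0$ and $\bm{L}_x,\bm{L}_y\succeq 0$. Hence $\mathcal{L}_{\text{LP}}$ is $L_Z$-smooth with $L_Z=(1+\eta)+\alpha\lambda_{\max}(\bm{L}_x)+\beta\lambda_{\max}(\bm{L}_y)$, and for step size $\gamma\le 1/L_Z$ each gradient step in Eq. (the unnumbered update following Eq. (3)) strictly decreases $\mathcal{L}_{\text{LP}}$ unless the gradient is zero; running $T$ such steps is a monotone decrease. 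For the $\theta$-step (lines 5--9): with $\bm{Z}$ fixed, this is exactly minimizing $\mathcal{L}_{\text{Deep}}(\sigma(f(\bm{X};\theta)),\bm{Z})$ by gradient descent, and assuming the stated hypothesis that $\text{OPTIMIZE}$ is gradient descent with a properly chosen (e.g. smoothness-compatible) learning rate, the composite loss $\theta\mapsto\|\bm{Z}-\sigma(f(\bm{X};\theta))\|_F^2$ — differentiable, with gradient Lipschitz on the relevant compact parameter region thanks to weight decay — decreases monotonically under each gradient step.

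The one subtlety I need to address, and I expect this to be the main obstacle, is the column-wise min-max normalization applied to $\bm{Z}$ after the propagation iterations: this is an extra operation not obviously compatible with descent on $\mathcal{L}$. I would deal with this either by (i) folding the normalization into the definition of the feasible set for $\bm{Z}$ and arguing the map is a projection-like operation that does not increase the objective, or more cleanly (ii) noting that the convergence claim concerns the ``inner'' alternating scheme and treating normalization as an implementation detail affecting only the parametrization of $\bm{Z}$ — i.e. stating the Proposition for the idealized algorithm without the rescaling step, as the convergence-analysis subsection already signals by writing ``a constant factor ignored''. With monotonicity and boundedness in hand, the sequence $\{\mathcal{L}(\theta_t,\bm{Z}_t)\}$ converges; then, because each block update is a gradient step on a smooth function, the successive-differences telescoping inequality forces $\|\nabla_\theta\mathcal{L}\|\to 0$ and $\|\nabla_{\bm{Z}}\mathcal{L}\|\to 0$ along the iterates, and any limit point is therefore a stationary point of $\mathcal{L}(\theta,\bm{Z})$. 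The routine parts — writing out the descent lemma inequality $\mathcal{L}(x^+)\le\mathcal{L}(x)-\frac{\gamma}{2}\|\nabla\mathcal{L}(x)\|^2$ for each block, summing, and passing to the limit — I would not grind through in detail.
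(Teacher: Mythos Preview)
Your proposal is correct and follows essentially the same block-coordinate descent argument as the paper: fix $\theta$ and observe the $\bm{Z}$-subproblem is convex so gradient descent decreases $\mathcal{L}_{\text{LP}}$, then fix $\bm{Z}$ and note gradient descent on the nonconvex $\theta$-subproblem still decreases the objective, yielding the monotone chain $\mathcal{L}(\theta_t,\bm{Z}_t)\le\mathcal{L}(\theta_t,\bm{Z}_{t-1})\le\mathcal{L}(\theta_{t-1},\bm{Z}_{t-1})$. Your version is in fact more careful than the paper's on two points the paper glosses over --- the column-wise min-max normalization (which the paper simply ignores in its proof) and the passage from ``objective values converge'' to ``iterates reach a stationary point'' (which the paper asserts without the telescoping/descent-lemma argument you supply).
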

\begin{proof}
Let $(\theta_t,\bm{Z}_t)$ be the solution at the $t$-th epoch and $\mathcal{L}(\theta_t,\bm{Z}_t)$ be the corresponding objective function value. In the propagation step, $\theta_t$ is fixed and the remaining objective, i.e. Eq. (2), is a standard convex optimization problem. Thus, our gradient descent algorithm guarantees to decrease the objective function value. Next, in the deep model training step, $\bm{Z}_t$ is fixed and the objective in Eq. (4) is non-convex. However, the gradient descent algorithm still decreases the objective until a first-order stationary point is achieved. We can conclude that,
\begin{equation*}
\begin{split}
\mathcal{L}(\theta_t,\bm{Z}_t)\le\mathcal{L}(\theta_t,\bm{Z}_{t-1})\le\mathcal{L}(\theta_{t-1},\bm{Z}_{t-1}). \end{split}
\end{equation*}
Consequently, the objective function $\mathcal{L}(\theta,\bm{Z})$ will monotonically decrease and finally converge to a stationary point.
\end{proof}

\section{Additional Experiments and Implementation Details}\label{experiments}
In this section, we present additional empirical results. All the computations are carried out on a workstation with an Intel E5-2680 v4 CPU, a Quadro RTX 5000 GPU, and 500GB main memory running the Linux platform. We summarize the characteristics of the used datasets in Table \ref{data-stats}. 
%More experimental results can be found in Appendix.

\begin{table}
\normalsize
\caption{Friedman statistics $\tau_F$ in terms of each evaluation
metric.}
\centering
\label{table-f-statics}
\begin{tabular}{c|c|c}
\cline{1-3}
\toprule
\textbf{Metrics}  &\textbf{$\tau_F$} &\textbf{Critical Value} \\ \midrule
Ranking Loss &31.88        & \multirow{3}*{\shortstack{2.15\\ (Methods: 7, Data sets: 30)} }\\
Average Precision &49.26 &\\
Hamming Loss   &2.87   &\\
\bottomrule
\end{tabular}
\end{table}

\begin{figure*}[!ht]
\centering
\includegraphics[width=0.95\linewidth]{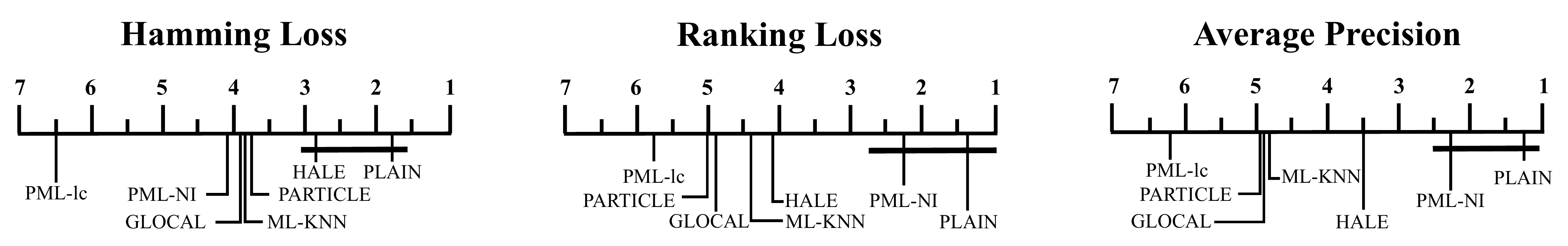}
\caption{Comparison of PLAIN (control algorithm) against six comparing algorithms with the Bonferroni-Dunn test. Algorithms not connected with PLAIN in the CD diagram are considered to have a significantly different performances from the control algorithm. (CD = 1.47 at 0.05 significance level)}
\label{BDTest}
\end{figure*}

\subsection{Statistical Tests}
Following previous works \cite{DBLP:conf/kdd/LyuFL20,zml-pami-pml}, to comprehensively evaluate the superiority of PLAIN, we further utilized \textit{Friedman test} \cite{DBLP:journals/jmlr/Demsar06} as the statistical test to analyze the relative performance among the competing methods. The experimental results are reported in Table \ref{table-f-statics}. At a 0.05 significance level, the null hypothesis of indistinguishable performance of PLAIN among all competing methods is clearly rejected. Subsequently, we employ the \textit{Bonferroni-Dunn} test as the posthoc test by regarding PLAIN as the control approach. Figure \ref{BDTest} reports the CD diagrams on each evaluation metric, where the average rank of the competing approaches is marked along the axis. The performance of the control method and one learning method is deemed to be significantly different if their average ranks differ by at least one CD. According to Figure \ref{BDTest}, we can observe that PLAIN achieves highly superior results to other baseline methods.

%Unfortunately, two state-of-the-art deep PML method APML \cite{DBLP:journals/corr/abs-1909-06717} and mutual teaching PML \cite{DBLP:journals/kbs/YanLF21} have not released their source codes. However, comparing with the reported experimental results in their papers, it is clear that PLAIN outperforms these two methods.

\subsection{Further Analysis}\label{loss-comparison}
%In this subsection, we analyze the proposed PLAIN model from different aspects to further show the efficiency, robustness and effectiveness of PLAIN.

% \begin{figure}[t]
% \begin{center}
%     \includegraphics[width=0.47\linewidth]{ablation_rloss.png}
%     \includegraphics[width=0.47\linewidth]{ablation_ap.png}
%     %\includegraphics[width=0.29\linewidth]{}
% \end{center}
% \vskip -0.1in
%    \caption{Ablation study on three datasets. We set $r=3$ for Image and Bibtex. Notably, the lower ranking loss is the better and so is the hamming loss. For Average Precision, the higher is the better. }
% \label{fig:ablation}
% \end{figure}

\paragraph{Convergence Curve} We also conducted experiments on Music-emotion to show the convergence curve of the objective function. The results in Figure \ref{fig:appendix_exp} (a) further support our proposition. Besides, the experimental results in Figure 2 (c) also agree with our theoretical findings. When the binary cross-entropy loss is used, our deep model is trained in an asymmetric fashion, and then there is no overall objective for our algorithm. However, as shown in Figure 2 (c), our method still tends to converge to a good solution.

\begin{figure}[t]
\begin{center}
    \subfigure[Convergence curve]{
        \includegraphics[width=0.45\linewidth]{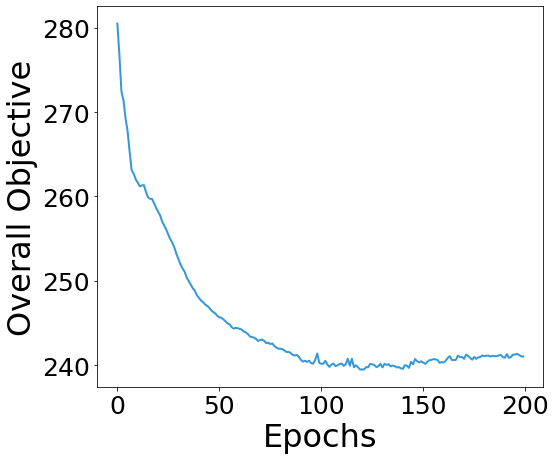}
    }
    \subfigure[Parameter sensitivity of $T$]{
        \includegraphics[width=0.45\linewidth]{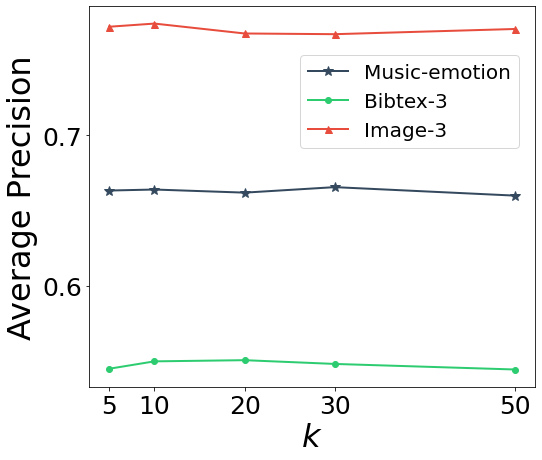}
    }
\end{center}
   \caption{(a) The convergence curve of objective function value $\mathcal{L}(\theta,\bm{Z})$ on Music-emotion dataset. (b) Changes in the performance of PLAIN as the number of nearest neighbors $k$ changes. }
\label{fig:appendix_exp}
\end{figure}

% \begin{figure}[!htbp]
% \begin{center}
%     \includegraphics[width=0.65\linewidth]{converge_obj_music_emotions.png}
% \end{center}
%    \caption{Convergence curves of objective function value $\mathcal{L}(\theta,\bm{Z})$ on Music-emotion dataset. }
% \label{convege-curves}
% \end{figure}

 % \paragraph{Parameter Analysis}
 % Furthermore, we study the sensitivity analysis of PLAIN with respect to the critical parameter $k$. In Figure \ref{fig:additional_exp} (c), we show the performance of PLAIN changes as $k$ increases from 5 to 50 with on three datasets. We can observe that PLAIN is robust in terms of the parameter $k$ and thus, we empirically fix $k = 10$ in our experiments.

\begin{table}[!t]
\centering
\normalsize
\addtolength{\tabcolsep}{-3pt}
\caption{Comparison of PLAIN with two representative deep PML methods APML and PML-MT on three real-world datasets. The best results are shown in bold face.}
\label{deep-comp}
\begin{threeparttable}
\begin{tabular}{c|ccc}
\cline{1-4}
\toprule
\textbf{Datasets}  & \textbf{PLAIN} & \textbf{APML} & \textbf{PML-MT$^\sharp$}
\\ \midrule
&\multicolumn{3}{c}{{Ranking Loss (the lower the better)}} \\ \midrule
Music-emotion     & \textbf{0.215$\pm$0.005} & 0.242$\pm$0.007 & 0.236$\pm$0.007\\
Music-style      & \textbf{0.134$\pm$0.007} & 0.145$\pm$0.006 & -\\
Mirflickr         & \textbf{0.088$\pm$0.008} & 0.124$\pm$0.014 & 0.126$\pm$0.016 \\\midrule
&\multicolumn{3}{c}{{Average Precision (the higher the better)}} \\ \midrule
Music-emotion     & \textbf{0.664$\pm$0.007}  & 0.621$\pm$0.013 & 0.627$\pm$0.010\\
Music-style      & \textbf{0.743$\pm$0.009} & 0.732$\pm$0.010 & -\\
Mirflickr         & \textbf{0.845$\pm$0.013} & 0.777$\pm$0.027 & 0.807$\pm$0.034 \\\midrule
&\multicolumn{3}{c}{{Hamming Loss (the lower the better)}} \\ \midrule
Music-emotion     & \textbf{0.192$\pm$0.003}  & 0.200$\pm$0.004 & 0.207$\pm$0.011\\
Music-style      & \textbf{0.114$\pm$0.004} & 0.115$\pm$0.002 & -\\
Mirflickr         & \textbf{0.166$\pm$0.003} & 0.170$\pm$0.003 & 0.173$\pm$0.011\\
\bottomrule
\end{tabular}
\begin{tablenotes}
\footnotesize
\item[$\sharp$] The results of PML-MT on Music-style are not reported in the corresponding paper.
\end{tablenotes}
\end{threeparttable}
\end{table}

\paragraph{Comparing with Deep PML Methods}
%It is unfortunate that two state-of-the-art deep PLL methods APML \cite{DBLP:journals/corr/abs-1909-06717} and PML-MT \cite{DBLP:journals/kbs/YanLF21} have not released their source codes yet. Besides, the experimental settings of these two methods on synthetic PML datasets are also different from our work. Hence,

To demonstrate the effectiveness of PLAIN, we further compare with two state-of-the-art deep PLL methods APML \cite{DBLP:journals/corr/abs-1909-06717} and PML-MT \cite{DBLP:journals/kbs/YanLF21} on three real-world datasets according to the reported results in their papers. Since their experimental settings are different from our work, we follow \cite{DBLP:journals/corr/abs-1909-06717,DBLP:journals/kbs/YanLF21} to split the datasets to 80\% training and 20\% testing. We then rerun the proposed PLAIN on these datasets and the experimental results are reported in Table \ref{deep-comp}. We can observe that PLAIN outperforms APML and PML-MT on all datasets and all evaluation metrics. For instance, on the Music-emotion, Music-style, and Mirflicker datasets, in terms of average precision, PLAIN achieves results superior to those of the best-competing methods by $\textbf{5.90}$\%, $\textbf{1.50}$\% and $\textbf{4.71}$\% respectively. These results demonstrate that the graph technique improves the disambiguation ability of deep PML models and further confirms the effectiveness of our proposed PLAIN.

% \begin{figure}[!t]
% \begin{center}
%     \includegraphics[width=0.7\linewidth]{sensi_k.png}
% \end{center}
%    \caption{Changes in the performance of PLAIN as the number of nearest neighbors $k$ changes. }
% \label{fig:k_sensi}
% \end{figure}

\paragraph{Comparing with Two-stage Variant}
To show the superiority of our end-to-end design, we further experiment with a two-stage variant of PLAIN which first propagates labels on the graph until converges and then trains the deep model to fit the pseudo-labels. The results are shown in \ref{tbl:two-stage}. We found the two-stage method underperforms PLAIN because it fails to disambiguate labels in the first stage, making deep model overfits. In contrast, PLAIN incorporates accurate model prediction into propagation for refined pseudo-labels. We will add complete results in the revision.

\begin{table}[!h]
\centering
\caption{Comparison of PLAIN with two-stage learning variant.}
\label{tbl:two-stage}
\addtolength{\tabcolsep}{1pt}
\begin{tabular}{ccccc}
\toprule
Metrics & Rloss$\downarrow$ & AP$\uparrow$ & Hloss$\downarrow$\\
\midrule
Two-Stage &0.236&0.637&0.201\\
PLAIN &\textbf{0.217} & \textbf{0.664} &\textbf{0.191}\\
\bottomrule
\end{tabular}
\end{table}

\paragraph{Parameter Analysis of $k$}\label{sensitivity_k}
Furthermore, we study the sensitivity analysis of PLAIN with respect to the critical parameter $k$. In Figure \ref{fig:appendix_exp} (b), we show the performance of PLAIN changes as $k$ increases from 5 to 50 on three datasets. We can observe that PLAIN is robust in terms of the parameter $k$ and thus, we empirically fix $k = 10$ in our experiments.

\begin{table*}[!t]
\centering
\caption{ List of notations.}
\label{tbl:notations}
\begin{tabular}{p{0.1\linewidth}p{0.6\linewidth}}
\toprule
\multicolumn{2}{c}{\textbf{Data}} \\
\midrule
$n$ &  Number of data examples\\
$d,L$ & Dimensions of features and labels \\
$r$ & Average number of false-positive labels in datasets  \\
$\mathcal{D}$ & Training dataset \\
$\mathcal{X},\mathcal{Y}$ &  Feature space and label space\\
% \multicolumn{2}{c}{\textbf{Variable}} \\
$S,\tilde{S}$ & Candidate label set and true label set \\
$\bm{y},\tilde{\bm{y}},\hat{\bm{y}}$ & Candidate label vector, true label vector, and predicted label vector\\
$\bm{Y},\hat{\bm{Y}}$ & Candidate label matrix and predicted label matrix \\
$\bm{Z}$ & Pseudo-Label matrix \\
\midrule
\multicolumn{2}{c}{\textbf{Algorithm}} \\
\midrule
$\mathcal{N}_k(\bm{x})$ & Indices of $k$-nearest neighbors of $\bm{x}$\\
$\bm{S}$ & Sparse instance affinity matrix \\
$\bm{A}^x,\bm{A}^y$ & Instance and label graphs \\
$\bm{D}_x,\bm{D}_x$ &  Diagonal instance and label degree matrices\\
$\bm{L}_x,\bm{L}_y$ & Instance and label Laplacian matrices \\
$\bm{I}$ & Identity matrix \\
\midrule
\multicolumn{2}{c}{\textbf{Function}} \\
\midrule
$f(\cdot)$ & Neural network \\
$\theta$ & Parameters of the neural network \\
$\mathcal{J}_x,\mathcal{J}_y$ & Instance and label graph regularizers \\
$\mathcal{J}$ & Overall propagation objective\\
$\sigma(\cdot)$ & Sigmoid function \\
$l(\cdot)$ & Loss function \\
$\mathcal{L}_\text{LP},\mathcal{L}_\text{Deep}$ & Overall objectives of propagation and deep model training \\
$\text{tr}(\cdot)$ & Trace of a matrix \\
$\mathbb{I}(\cdot)$ & Indicator function for equivalance testing \\
\midrule
\multicolumn{2}{c}{\textbf{Hyper-parameter}} \\
\midrule
$k$ & Number of nearest neighbors \\
$\rho$ & Weights of graph similarity \\
$\eta,\alpha,\beta$ & Loss weighting factors of candidate consistency, instance regularizer, and label regularizer \\
$\gamma$ & Learning rate of gradient descent for label propagation \\
$T$ & Number of propagation gradient updating steps \\

\bottomrule
\end{tabular}
\end{table*}

\section{Notations and Terminology}
The notations are summarized in Table \ref{tbl:notations}.

\end{document}